\newtheorem{proposition}{Proposition}
\newtheorem{lemma}{Lemma}
\title{The Size of a Hyperball in a Conceptual Space}
\author{Lucas Bechberger\thanks{ORCID: 0000-0002-1962-1777}\\Institute of Cognitive Science, Osnabr\"uck University\\ \emph{lucas.bechberger@uni-osnabrueck.de}}
\date{}
\begin{document}
\maketitle
%================================================================================================================================================================%

\begin{abstract}
The cognitive framework of conceptual spaces \cite{Gardenfors2000} provides geometric means for representing knowledge. A conceptual space is a high-dimensional space whose dimensions are partitioned into so-called domains. Within each domain, the Euclidean metric is used to compute distances. Distances in the overall space are computed by applying the Manhattan metric to the intra-domain distances. Instances are represented as points in this space and concepts are represented by regions. In this paper, we derive a formula for the size of a hyperball under the combined metric of a conceptual space. One can think of such a hyperball as the set of all points having a certain minimal similarity to the hyperball's center.
\end{abstract}
%================================================================================================================================================================%

\section{Conceptual Spaces}
This section presents the cognitive framework of conceptual spaces as defined by G\"ardenfors \cite{Gardenfors2000} and introduces our formalization of dimensions, domains, and distances as described in \cite{Bechberger2017KI}.

%dimensions
A conceptual space is a high-dimensional space spanned by a set $D$ of so-called ``quality dimensions''. Each of these dimensions $d \in D$ represents a way in which two stimuli can be judged to be similar or different. Examples for quality dimensions include temperature, weight, time, pitch, and hue. We denote the distance between two points $x$ and $y$ with respect to a dimension $d$ as $|x_d - y_d|$.

%domains
A domain $\delta \subseteq D$ is a set of dimensions that inherently belong together. Different perceptual modalities (like color, shape, or taste) are represented by different domains. The color domain for instance consists of the three dimensions hue, saturation, and brightness.

%respective weighted metrics (formal)
G\"{a}rdenfors argues based on psychological evidence that distance within a domain $\delta$ should be measured by the weighted Euclidean metric: 
$$d_E^{\delta}(x,y, W_{\delta}) = \sqrt{\sum_{d \in \delta} w_{d} \cdot | x_{d} - y_{d} |^2}$$
The parameter $W_{\delta}$ contains positive weights $w_{d}$ for all dimensions $d \in \delta$ representing their relative importance. We assume that $\textstyle\sum_{d \in \delta} w_{d} = 1$.\\

%As we assume that all dimensions within the domain are normalized, the overall distance within the domain is thus also normalized (independent of its number of dimensions).

The overall conceptual space $CS$ is defined as the product space of all dimensions. Again, based on psychological evidence, G\"{a}rdenfors argues that distance within the overall conceptual space should be measured by the weighted Manhattan metric $d_M$ of the intra-domain distances. Let $\Delta$ be the set of all domains in $CS$. We define the distance within a conceptual space as follows:
$$
d_C^{\Delta}(x,y,W) = \sum_{\delta \in \Delta} w_{\delta} \cdot d_E^{\delta}(x,y,W_{\delta})
= \sum_{\delta \in \Delta}w_{\delta} \cdot \sqrt{\sum_{d \in \delta} w_{d} \cdot |x_{d} - y_{d}|^2}
$$
The parameter $W\hspace{-0.2cm} = \hspace{-0.2cm}\langle W_{\Delta},\{W_{\delta}\}_{\delta \in \Delta}\rangle$ contains $W_{\Delta}$, the set of positive domain weights $w_{\delta}$. We require that $\textstyle\sum_{\delta \in \Delta} w_{\delta} = |\Delta|$. Moreover, $W$ contains for each domain $\delta \in \Delta$ a set $W_{\delta}$ of dimension weights as defined above. The weights in $W$ are not globally constant, but depend on the current context. One can easily show that $d_C^{\Delta}(x,y,W)$ with a given $W$ is a metric.\\

%similarity as distance-based with exponential decay
The similarity of two points in a conceptual space is inversely related to their distance. G\"{a}rdenfors expresses this as follows :
$$Sim(x,y) = e^{-c \cdot d(x,y)}\quad \text{with a constant}\; c >0 \; \text{and a given metric}\; d$$

Properties (like red, round, and sweet) and concepts (like apple, dog, and chair) can be represented by regions in this space: Properties are defined within a single domain and concepts are defined on the overall space. In \cite{Bechberger2017KI}, we have developed a mathematical formalization of concepts and properties.

%================================================================================================================================================================%

\section{Hyperballs under the Unweighted Metric}
\label{Hyperballs}

In general, a hyperball of radius $r$ around a point $p$ can be defined as the set of all points with a distance of at most $r$ to $p$:
$$H =  \{x \in CS \;|\;d(x,p) \leq r \}$$

\begin{figure}[tp]
\centering
\includegraphics[width=\textwidth]{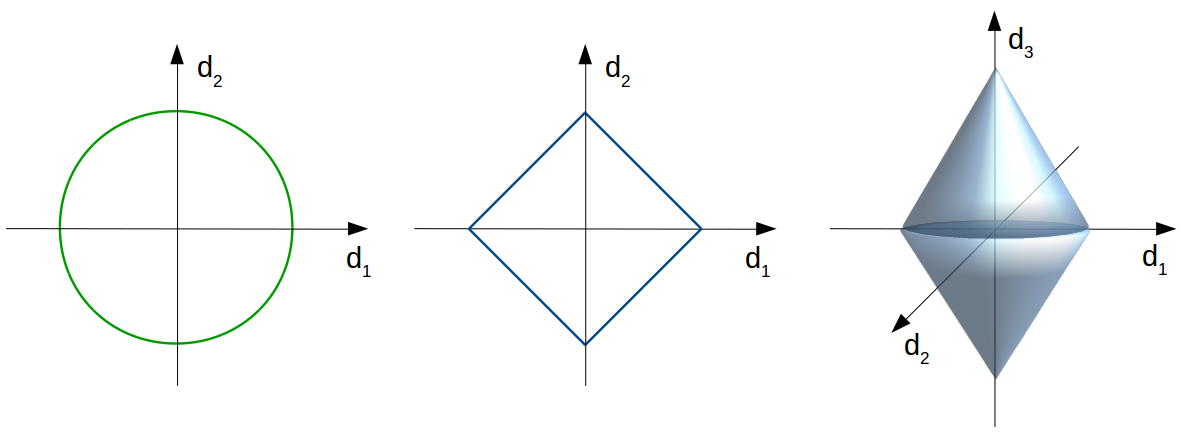}
\caption{Left: Two-dimensional hyperball under the Euclidean metric. Middle: Two-dimensional hyperball under the Manhattan metric. Right: Three-dimensional hyperball under the combined metric (with domain structure $\Delta = \{ \delta_1 = \{d_1, d_2\}, \delta_2 = \{d_3\}\}$).}
\label{fig:hyperballs}
\end{figure}

If the Euclidean distance $d_E$ is used, this corresponds to our intuitive notion of a ball -- a round shape centered at $p$. However, under the Manhattan distance $d_M$, hyperballs have the shape of diamonds. Under the combined distance $d_C^\Delta$, a hyperball in three dimensions has the shape of a double cone (cf. Figure \ref{fig:hyperballs}).

As similarity is inversely related to distance, one can interpret a hyperball in a conceptual space as the set of all points that have a minimal similarity $\alpha$ to the central point $p$, where $\alpha$ depends on the radius of the hyperball.

In this section, we assume an unweighted version of $d_C^\Delta$:
$$d_C^\Delta(x,y) = \sum_{\delta \in \Delta} \sqrt{\sum_{d \in \delta} |x_{d} - y_{d}|^2}$$

In order to derive a formula for the hypervolume of a hyperball under $d_C^\Delta$, we need to use the following three lemmata:

\begin{lemma}
\label{lemma:AngleIntegral}
$$\int_0^{2\pi} \int_0^\pi \int_0^\pi \dots \int_0^\pi \sin^{n-2}(\phi_1) \sin^{n-3}(\phi_2) \dots \sin(\phi_{n-2}) d\phi_1 \dots d\phi_{n-1} = 2 \cdot \frac{\pi^{\frac{n}{2}}}{\Gamma\left(\frac{n}{2}\right)}$$
where $\Gamma(\cdot)$ is Euler's Gamma function and $n \in \mathbb{N}$.
\end{lemma}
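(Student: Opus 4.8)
The plan is to evaluate the iterated integral directly by separating it into a product of one-dimensional integrals and then telescoping a product of Gamma functions. First I would observe that the integrand factorizes across the angular variables: the outermost variable $\phi_{n-1}$ carries no sine factor and contributes $\int_0^{2\pi} d\phi_{n-1} = 2\pi$, while each remaining variable $\phi_j$ (for $j = 1, \dots, n-2$) appears only through $\sin^{n-1-j}(\phi_j)$. Hence the whole expression becomes
$$2\pi \cdot \prod_{j=1}^{n-2} \int_0^\pi \sin^{n-1-j}(\phi_j)\, d\phi_j = 2\pi \cdot \prod_{m=1}^{n-2} \int_0^\pi \sin^m(\phi)\, d\phi,$$
where I have reindexed by $m = n-1-j$, so that $m$ runs over $1, \dots, n-2$.

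The key ingredient is the classical identity
$$\int_0^\pi \sin^m(\phi)\, d\phi = \sqrt{\pi}\,\frac{\Gamma\!\left(\frac{m+1}{2}\right)}{\Gamma\!\left(\frac{m+2}{2}\right)},$$
which I would establish via the substitution $t = \sin^2\phi$ that turns the integral into the Beta integral $B\!\left(\frac{m+1}{2}, \frac12\right)$, together with the relations $B(a,b) = \Gamma(a)\Gamma(b)/\Gamma(a+b)$ and $\Gamma(\tfrac12) = \sqrt{\pi}$. Substituting this into the product gives
$$2\pi \cdot \pi^{\frac{n-2}{2}} \cdot \prod_{m=1}^{n-2} \frac{\Gamma\!\left(\frac{m+1}{2}\right)}{\Gamma\!\left(\frac{m+2}{2}\right)}.$$

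The final step is to recognize the remaining product as telescoping: writing $a_m = \Gamma\!\left(\frac{m+1}{2}\right)$, each factor equals $a_m / a_{m+1}$, so the product collapses to $a_1 / a_{n-1} = \Gamma(1)/\Gamma(\tfrac{n}{2}) = 1/\Gamma(\tfrac{n}{2})$. Combining the powers of $\pi$ as $\pi^{1 + \frac{n-2}{2}} = \pi^{n/2}$ then yields the claimed value $2\pi^{n/2}/\Gamma(n/2)$. I expect the main obstacle to be the Beta-function identity: it is the only non-elementary step and must be stated with care regarding the half-integer arguments of $\Gamma$, whereas the factorization and the telescoping are purely mechanical. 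A slicker but less self-contained alternative would be to compute $\int_{\mathbb{R}^n} e^{-\|x\|^2}\,dx = \pi^{n/2}$ in spherical coordinates and identify the angular integral above as the surface area of the unit sphere; this, however, presupposes knowledge of the spherical-coordinate Jacobian, which is itself essentially the integrand.
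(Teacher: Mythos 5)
Your proposal is correct and follows essentially the same route as the paper: factorize the iterated integral into one-dimensional powers of sine, evaluate each factor as a Beta-function value $B\bigl(\tfrac{m+1}{2},\tfrac12\bigr)$, convert to Gamma functions, and telescope down to $2\pi^{n/2}/\Gamma(n/2)$. The only cosmetic difference is that the paper invokes the trigonometric representation $B(x,y)=2\int_0^{\pi/2}\sin^{2x-1}(\phi)\cos^{2y-1}(\phi)\,d\phi$ directly, whereas you reach the same identity via the substitution $t=\sin^2\phi$ into the algebraic Beta integral.
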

\begin{proof}
\begin{align*}
I &=\int_0^{2\pi} \int_0^\pi \int_0^\pi \dots \int_0^\pi \sin^{n-2}(\phi_1) \sin^{n-3}(\phi_2) \dots \sin(\phi_{n-2}) d\phi_1 \dots d\phi_{n-1}\\
&= \left(\int\displaylimits_{0}^{2\pi} 1 \;d\phi_{n - 1}\right) 
\left(\int\displaylimits_{0}^\pi \sin(\phi_{n-2}) \;d\phi_{n - 2}\right)
\cdots 
\left(\int\displaylimits_{0}^\pi \sin^{n-2}(\phi_{1}) \;d\phi_{1}\right)\\
&= \left(4 \cdot \int\displaylimits_{0}^{\frac{\pi}{2}} 1 \;d\phi_{n - 1}\right) 
\left(2 \cdot \int\displaylimits_{0}^{\frac{\pi}{2}} \sin(\phi_{n-2}) \;d\phi_{n-2}\right)
\cdots 
\left(2 \cdot \int\displaylimits_{0}^{\frac{\pi}{2}} \sin^{n-2}(\phi_{1}) \;d\phi_{1}\right)
\end{align*}
We can now use the definition of the Beta function, which is $$B(x,y) = 2 \cdot \int\displaylimits_{0}^{\frac{\pi}{2}} \sin^{2x-1}(\phi) \cos^{2y-1}(\phi)\; d\phi$$
Using $y = \frac{1}{2}$, we get:
\begin{align*}
I &= \left(4 \cdot \int\displaylimits_{0}^{\frac{\pi}{2}} 1 \;d\phi_{n - 1}\right) 
\left(2 \cdot \int\displaylimits_{0}^{\frac{\pi}{2}} \sin(\phi_{n-2}) \;d\phi_{n-2}\right)
\cdots 
\left(2 \cdot \int\displaylimits_{0}^{\frac{\pi}{2}} \sin^{n-2}(\phi_{1}) \;d\phi_{1}\right)\\
&= 2 \cdot B(\frac{1}{2},\frac{1}{2}) \cdot B(1,\frac{1}{2}) \cdots B(\frac{n-2}{2},\frac{1}{2}) \cdot B(\frac{n-1}{2},\frac{1}{2})
\end{align*}
Next, we use the identity $B(x,y) = \frac{\Gamma(x)\Gamma(y)}{\Gamma(x+y)}$ with Euler's Gamma function $\Gamma$ and the fact that $\Gamma(\frac{1}{2}) = \sqrt{\pi}$. We get:
$$I = 2 \cdot \frac{\Gamma(\frac{1}{2})\Gamma(\frac{1}{2})}{\Gamma(1)} \cdot \frac{\Gamma(1)\Gamma(\frac{1}{2})}{\Gamma(\frac{3}{2})} \cdots \frac{\Gamma(\frac{n-2}{2})\Gamma(\frac{1}{2})}{\Gamma(\frac{n-1}{2})} \cdot \frac{\Gamma(\frac{n-1}{2})\Gamma(\frac{1}{2})}{\Gamma(\frac{n}{2})} 
= 2 \cdot \frac{\Gamma(\frac{1}{2})^{n}}{\Gamma(\frac{n}{2})} 
= 2 \cdot \frac{\pi^{\frac{n}{2}}}{\Gamma(\frac{n}{2})}$$
\end{proof}

\begin{lemma}
\label{lemma:IntegralBetaFunction}
For any natural number $j > 0$ and any $a,b \in \mathbb{R}$, the following equation holds:
$$\int_0^{r-\sum_{i=1}^{j-1}r_i} r_j^{a-1} \cdot \left(r- \sum_{i=1}^j r_i\right)^b dr_j = B(a,b+1) \cdot \left(r- \sum_{i=1}^{j-1} r_i\right)^{a+b}$$
\end{lemma}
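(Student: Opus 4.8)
The plan is to recognize the left-hand side as a Beta integral after a suitable normalization. The first observation is that the variable of integration is only $r_j$; all other $r_i$ (for $i < j$) are held fixed, so the quantity $R := r - \sum_{i=1}^{j-1} r_i$ appearing in the upper limit is a constant with respect to $r_j$. Moreover, the factor in parentheses satisfies $r - \sum_{i=1}^{j} r_i = R - r_j$. Thus the integral collapses to the clean one-variable form $\int_0^R r_j^{a-1}(R - r_j)^b \, dr_j$, and the goal reduces to showing this equals $B(a, b+1)\, R^{a+b}$.

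Next I would rescale the interval of integration to $[0,1]$ by the substitution $r_j = R u$, so that $dr_j = R\, du$ and $R - r_j = R(1-u)$. Pulling the powers of $R$ out front gives $R^{(a-1)+b+1}\int_0^1 u^{a-1}(1-u)^b\,du = R^{a+b}\int_0^1 u^{a-1}(1-u)^b\,du$, which already exhibits the expected factor $R^{a+b} = (r - \sum_{i=1}^{j-1}r_i)^{a+b}$.

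It then remains only to identify the normalized integral $\int_0^1 u^{a-1}(1-u)^b\,du$ with $B(a, b+1)$. This is precisely the standard integral representation of the Beta function, $B(x,y) = \int_0^1 t^{x-1}(1-t)^{y-1}\,dt$, read with $x = a$ and $y - 1 = b$, i.e. $y = b+1$. To keep consistency with the trigonometric definition of $B$ used in the proof of Lemma~\ref{lemma:AngleIntegral}, one can verify the two representations agree via the substitution $u = \sin^2(\phi)$, under which $\int_0^1 u^{x-1}(1-u)^{y-1}\,du = 2\int_0^{\pi/2}\sin^{2x-1}(\phi)\cos^{2y-1}(\phi)\,d\phi$.

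I do not expect a genuine obstacle here: the argument is a single change of variables followed by pattern-matching against the definition of $B$. The only points demanding care are bookkeeping ones — confirming that $R$ is indeed constant under the $r_j$-integration, tracking the exponents of $R$ so that they sum to $a+b$, and getting the second Beta argument right as $b+1$ (rather than $b$), since the integrand carries $(1-u)^b$ while the definition expects $(1-u)^{y-1}$.
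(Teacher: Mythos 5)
Your proposal is correct and matches the paper's own proof: both use the substitution $r_j = \left(r - \sum_{i=1}^{j-1} r_i\right) z$ (your $r_j = Ru$) to reduce the integral to $\int_0^1 z^{a-1}(1-z)^b\,dz$ and then invoke the integral representation $B(x,y) = \int_0^1 t^{x-1}(1-t)^{y-1}\,dt$ with $y = b+1$. Your extra remark reconciling the trigonometric and integral forms of $B$ is a nice touch but does not change the argument.
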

\begin{proof}
We can make a variable change by defining $r_j = \left(r- \sum_{i=1}^{j-1} r_i\right)\cdot z$ which gives $dr_j = \left(r- \sum_{i=1}^{j-1} r_i\right)\cdot dz$. This gives us:
\begin{align*}
&\int_0^{r-\sum_{i=1}^{j-1}r_i} r_j^{a-1} \cdot \left(r- \sum_{i=1}^j r_i\right)^b dr_j\\
&=\int_0^{1} \left(r- \sum_{i=1}^{j-1} r_i\right)^{a-1} \cdot z^{a-1} \cdot \left(r- \sum_{i=1}^{j-1} r_i - \left(r- \sum_{i=1}^{j-1} r_i\right)\cdot z\right)^b \cdot\left(r- \sum_{i=1}^{j-1} r_i\right) dz\\
&= \left(r- \sum_{i=1}^{j-1} r_i\right)^{a-1+b+1} \int_0^{1} z^{a-1} (1-z)^b dz = \left(r- \sum_{i=1}^{j-1} r_i\right)^{a+b} \cdot B(a,b+1)
\end{align*}
The last transformation uses the fact that $B(x,y) = \int_0^1 t^{x-1} (1-t)^{y-1} dt$.
\end{proof}

\begin{lemma}
\label{lemma:RadiusIntegral}
For any natural number $k > 0$, any $r_1,\dots,r_k,n_1,\dots,n_k > 0$, $n = \sum_{i=0}^k n_i$, $r = \sum_{i=0}^k r_i$ the following equation holds:
$$\int_0^r r_1^{n_1 - 1} \int_0^{r-r_1} r_2^{n_2 - 1} \dots \int_0^{r-\sum_{i=1}^{k-1} r_i} r_k^{n_k - 1} dr_k \dots dr_1 = \frac{r^n}{\Gamma(n+1)} \prod_{i=1}^k \Gamma(n_i)$$
\end{lemma}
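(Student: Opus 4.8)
The plan is to evaluate the $k$-fold nested integral from the innermost variable outward, applying Lemma~\ref{lemma:IntegralBetaFunction} exactly once at each level. The key observation is that every integration step regenerates an integrand of precisely the form $r_j^{a-1}\cdot\left(r-\sum_{i=1}^{j}r_i\right)^{b}$ handled by Lemma~\ref{lemma:IntegralBetaFunction}, so the whole computation reduces to propagating a single invariant down through the levels. To keep the bookkeeping light I would write $T_\ell=\sum_{i=\ell+1}^{k}n_i$, so that $n_j+T_j=T_{j-1}$ and $T_k=0$.

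Concretely, I would show by downward induction on the level $j$ that integrating out $r_k,r_{k-1},\dots,r_{j+1}$ leaves
$$\left(\prod_{\ell=j+1}^{k} B\!\left(n_\ell,\,1+T_\ell\right)\right)\cdot\left(r-\sum_{i=1}^{j}r_i\right)^{T_j}.$$
The base case is the innermost integral $\int_0^{r-\sum_{i=1}^{k-1}r_i} r_k^{n_k-1}\,dr_k$, i.e.\ Lemma~\ref{lemma:IntegralBetaFunction} in the case $b=0$, which yields the factor $B(n_k,1)=\tfrac{1}{n_k}$ together with the power $\left(r-\sum_{i=1}^{k-1}r_i\right)^{n_k}$, matching the invariant at $j=k-1$. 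For the inductive step, the integrand over $r_j$ is $r_j^{n_j-1}\cdot\left(r-\sum_{i=1}^{j}r_i\right)^{T_j}$; applying Lemma~\ref{lemma:IntegralBetaFunction} with $a=n_j$, $b=T_j$ introduces the new Beta factor $B(n_j,\,T_j+1)$ and raises the remaining-radius term to the power $n_j+T_j=T_{j-1}$, which is exactly the invariant at level $j-1$.

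Setting $j=0$ (and using $\sum_{i=1}^{0}r_i=0$, $T_0=\sum_{i=1}^{k}n_i=n$) collapses the remaining-radius term to $r^{n}$ and leaves the full product $\prod_{\ell=1}^{k} B(n_\ell,\,1+T_\ell)$. Expanding each factor through $B(x,y)=\frac{\Gamma(x)\Gamma(y)}{\Gamma(x+y)}$ and using $n_\ell+1+T_\ell=1+T_{\ell-1}$ turns it into $\frac{\Gamma(n_\ell)\,\Gamma(1+T_\ell)}{\Gamma(1+T_{\ell-1})}$, so the Gamma terms $\Gamma(1+T_\ell)$ telescope and leave only $\frac{\prod_{\ell=1}^{k}\Gamma(n_\ell)}{\Gamma(1+T_0)}=\frac{\prod_{i=1}^{k}\Gamma(n_i)}{\Gamma(n+1)}$. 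Multiplying by $r^{n}$ gives the claimed formula.

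The individual integrations are routine; the part that needs care is the invariant itself — verifying that the accumulated exponent entering Lemma~\ref{lemma:IntegralBetaFunction} at each level is exactly $T_j$ so the parameters line up, and then checking that the accumulated product of Beta functions telescopes cleanly to a single Gamma ratio. That telescoping, together with the identification of the final exponent as $n$, is where I expect the only genuine subtlety to lie.
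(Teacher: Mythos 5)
Your proposal is correct and follows essentially the same route as the paper's own proof: peeling off the innermost integral via Lemma~\ref{lemma:IntegralBetaFunction}, accumulating Beta factors $B(n_k,1), B(n_{k-1},n_k+1),\dots$, and telescoping them through $B(x,y)=\frac{\Gamma(x)\Gamma(y)}{\Gamma(x+y)}$ into $\frac{\prod_i \Gamma(n_i)}{\Gamma(n+1)}$. Your explicit downward induction with the $T_\ell$ bookkeeping is simply a more rigorous write-up of what the paper does informally with ellipses.
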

\begin{proof}

Using Lemma \ref{lemma:IntegralBetaFunction}, we can solve the innermost integral by setting $j=k, a= n_k, b=0$, which gives us $B(n_k,1) \cdot \left(r- \sum_{i=1}^{k-1} r_i\right)^{n_k}$. Therefore:
\begin{align*}
I &= \int_0^r r_1^{n_1 - 1} \int_0^{r-r_1} r_2^{n_2 - 1} \dots \int_0^{r-\sum_{i=1}^{k-1} r_i} r_k^{n_k - 1} dr_k \dots dr_1\\
&= B(n_k,1) \cdot \int_0^r r_1^{n_1 - 1} \dots \int_0^{r-\sum_{i=1}^{k-2} r_i} r_{k-1}^{n_{k-1} - 1} \cdot \left(r- \sum_{i=1}^{k-1} r_i\right)^{n_k} dr_{k-1} \dots dr_1
\end{align*}
As one can see, we can again apply Lemma \ref{lemma:IntegralBetaFunction} to the innermost integral. Repeatedly applying Lemma \ref{lemma:IntegralBetaFunction} finally results in:
$$I = B(n_k,1)\cdot B(n_{k-1},n_k + 1) \cdot \dots \cdot B(n_1, n_2+\dots+n_k+1)\cdot r^{n_1+\dots+n_k}$$
We use that $B(x,y) = \frac{\Gamma(x)\Gamma(y)}{\Gamma(x+y)}$ in order to rewrite this equation:
$$I = r^{n_1+\dots+n_k} \cdot \frac{\Gamma(n_k)\Gamma(1)}{\Gamma(n_k+1)} \cdot \frac{\Gamma(n_{k-1})\Gamma(n_k+1)}{\Gamma(n_{k-1}+n_k+1)} \cdots \frac{\Gamma(n_1)\Gamma(n_2+\dots+n_k+1)}{\Gamma(n_1+n_2+\dots+n_k+1)}$$
Because $\Gamma(1) = 1$ and because most of the terms cancel out, this reduces to:
$$I = r^{n_1+\dots+n_k} \cdot \Gamma(n_k) \cdots \Gamma(n_1) \cdot \frac{1}{\Gamma(n_1+\dots+n_k+1)} = \frac{r^n}{\Gamma(n+1)} \prod_{i=1}^{k} \Gamma(n_i)$$
\end{proof}
Using these three lemata, we can now derive the size of a hyperball in a conceptual space without domain and dimension weights: 

\begin{proposition}
\label{proposition:HyperballVolume}
The hypervolume of a hyperball with radius $r$ in a space with the unweighted combined metric $d_C^{\Delta}$ and the domain structure $\Delta$ can be computed in the following way, where $n$ is the overall number of dimensions and $n_\delta$ is the number of dimensions in domain $\delta$:
$$V(r,\Delta) = \frac{r^n}{n!} \prod_{\delta \in \Delta} \left(n_\delta! \frac{\pi^{\frac{n_\delta}{2}}}{\Gamma\left(\frac{n_\delta}{2}+1\right)}\right)$$
\end{proposition}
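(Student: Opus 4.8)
The plan is to compute the hypervolume as the integral of the constant density $1$ over the region $H = \{x : d_C^\Delta(x,p) \le r\}$ and to exploit the domain structure of the metric. Since the unweighted combined distance depends on the Cartesian coordinates only through the per-domain Euclidean radii $\rho_\delta = \sqrt{\sum_{d \in \delta}|x_d - p_d|^2}$, the defining constraint collapses to $\sum_{\delta \in \Delta}\rho_\delta \le r$ with each $\rho_\delta \ge 0$. This strongly suggests introducing, separately within each domain $\delta$, hyperspherical coordinates centered at $p$: a radial coordinate $\rho_\delta$ together with $n_\delta - 1$ angular coordinates.

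First I would carry out this coordinate change one domain at a time. For a single domain of dimension $n_\delta$, the Cartesian volume element $\prod_{d\in\delta}dx_d$ becomes $\rho_\delta^{n_\delta-1}\sin^{n_\delta-2}(\phi_1)\cdots\sin(\phi_{n_\delta-2})\,d\rho_\delta\,d\phi_1\cdots d\phi_{n_\delta-1}$, so the Jacobian factorizes into a radial power $\rho_\delta^{n_\delta-1}$ and an angular part that is exactly the integrand of Lemma \ref{lemma:AngleIntegral}. Taking the product over all domains, the full volume element splits into a product of purely angular factors (one per domain) and the product of radial powers $\prod_{\delta}\rho_\delta^{n_\delta-1}$.

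The key observation is that the angular coordinates are entirely unconstrained by $\sum_\delta \rho_\delta \le r$, so the angular integrals decouple completely: each ranges over a full sphere and, by Lemma \ref{lemma:AngleIntegral}, evaluates to $2\,\pi^{n_\delta/2}/\Gamma(n_\delta/2)$, the surface area of the unit $(n_\delta-1)$-sphere. What remains is the coupled radial integral of $\prod_\delta \rho_\delta^{n_\delta-1}$ over the simplex $\{\rho_\delta \ge 0,\ \sum_\delta \rho_\delta \le r\}$. Writing this as an iterated integral with limits $\rho_\delta \in [0,\ r - \sum_{\text{earlier}}\rho]$ puts it in precisely the form of Lemma \ref{lemma:RadiusIntegral} (with its $n_i$ playing the role of our $n_\delta$ and $n=\sum_\delta n_\delta$), which evaluates it to $\frac{r^n}{\Gamma(n+1)}\prod_\delta \Gamma(n_\delta)$.

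Multiplying the decoupled angular factors by this radial result gives $V(r,\Delta) = \frac{r^n}{\Gamma(n+1)}\prod_{\delta}\bigl(2\,\tfrac{\pi^{n_\delta/2}}{\Gamma(n_\delta/2)}\,\Gamma(n_\delta)\bigr)$, and the final step is pure simplification: using $\Gamma(n+1)=n!$, $n_\delta! = n_\delta\,\Gamma(n_\delta)$, and $\Gamma(n_\delta/2+1)=(n_\delta/2)\,\Gamma(n_\delta/2)$, each per-domain factor $2\,\Gamma(n_\delta)\,\pi^{n_\delta/2}/\Gamma(n_\delta/2)$ rewrites as $n_\delta!\,\pi^{n_\delta/2}/\Gamma(n_\delta/2+1)$, yielding the claimed formula. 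I expect the only genuine obstacle to be setting up the global coordinate change carefully — verifying that the Jacobian over the whole product space factorizes as claimed and that the radial domain is exactly the simplex required by Lemma \ref{lemma:RadiusIntegral}, while checking the degenerate one-dimensional domains (where the ``angular integral'' reduces to the factor $2$ counting the two points at a given radius, consistent with Lemma \ref{lemma:AngleIntegral} at $n_\delta=1$). Once that is in place, the two lemmata do all the real work and the remainder is bookkeeping with the Gamma function.
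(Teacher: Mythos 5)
Your proposal is correct and follows essentially the same route as the paper's own proof: per-domain spherical coordinates, a block-factorized Jacobian, decoupled angular integrals handled by Lemma \ref{lemma:AngleIntegral}, the simplex-constrained radial integral handled by Lemma \ref{lemma:RadiusIntegral}, and the same Gamma-function bookkeeping at the end. Your additional remark about one-dimensional domains (where the angular factor degenerates to $2$) is a detail the paper glosses over, but it does not change the argument.
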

\begin{proof}

The hyperball can be defined as the set of  all points that have a distance of maximally $r$ to the origin, i.e., $$H = \Big \{x \in CS \;|\;d_C^\Delta(x,0) = \sum_{\delta \in \Delta} \sqrt{\sum_{d \in \delta} x_{d}^2} \leq r\Big \}$$
If we define $\forall \delta \in \Delta: r_\delta := \sqrt{\sum_{d \in \delta} x_{d}^2}$, we can easily see that $\sum_{\delta \in \Delta} r_\delta \leq r$. The term $r_\delta$ can be interpreted as the distance between $x$ and the origin within the domain $\delta$. The constraint $\sum_{\delta \in \Delta} r_\delta \leq r$ then simply means that the sum of domain-wise distances is less than the given radius. One can thus interpret $r_\delta$ as the radius within domain $\delta$.

We would ultimately like to compute
$$V(r,\Delta) = \int \dots \int_H 1\; dH$$ 
This integration becomes much easier if we use spherical coordinates instead of the Cartesian coordinates provided by our conceptual space.\\

Let us first consider the case of a single domain $\delta$ of size $n$. A single domain corresponds to a standard Euclidean space, therefore we can use the standard procedure of changing to spherical coordinates (cf., e.g., \cite{DeRise1992}). Let us index the dimensions of $\delta$ as $d_1,\dots,d_n$. The coordinate change within the domain $\delta$ then looks like this:
\begin{align*}
x_{1} &= t \cdot \cos(\phi_{1})\\
x_{2} &= t \cdot \sin(\phi_{1}) \cdot \cos(\phi_{2})\\
&\hspace{0.2cm}\vdots\\
x_{n-1} &= t \cdot \sin(\phi_{1}) \cdots \sin(\phi_{n-2}) \cdot \cos(\phi_{n-1})\\
x_{n} &= t \cdot \sin(\phi_{1}) \cdots \sin(\phi_{n-2}) \cdot \sin(\phi_{n-1})
\end{align*}

In order to switch the integral to spherical coordinates, we need to calucalate the volume element. This can be found by looking at the determinant of the transformation's Jacobian matrix. The Jacobian matrix of the transformation of a single domain $\delta$ can be written as follows:

\begin{gather*}
 J_{\delta} =
  \left[ {\begin{array}{cccc}
   \frac{\delta x_{1}}{\delta t} & \frac{\delta x_{1}}{\delta \phi_{1}} & \dots & \frac{\delta x_{1}}{\delta \phi_{n-1}} \\
   \frac{\delta x_{2}}{\delta t} & \frac{\delta x_{2}}{\delta \phi_{1}} & \dots & \frac{\delta x_{2}}{\delta \phi_{n-1}} \\
   \vdots & \vdots & \ddots & \vdots \\
   \frac{\delta x_{n}}{\delta t} & \frac{\delta x_{n}}{\delta \phi_{1}} & \dots & \frac{\delta x_{n}}{\delta \phi_{n-1}} \\
  \end{array} } \right] =\\[1ex]
  \hspace{-1cm}\left[ {\tiny\begin{array}{cccccc}
	\cos(\phi_{1}) 							&-t\sin(\phi_{1}) 						&0 &0 &\dots &0 \\
	\sin(\phi_{1})\cos(\phi_{2})		&t\cos(\phi_{1})\cos(\phi_{2})	&-t\sin(\phi_{1})\sin(\phi_{2}) &0   &\dots &0\\
	\vdots &\vdots &\vdots &\vdots &\vdots &\vdots\\
	\sin(\phi_{1})\cdots\sin(\phi_{n-2})\cos(\phi_{n-1}) &\dots &\dots &\dots &\dots &-t\sin(\phi_{1})\cdots\sin(\phi_{n-2})\sin(\phi_{n-1})\\
	\sin(\phi_{1})\cdots\sin(\phi_{n-2})\sin(\phi_{n-1}) &\dots &\dots &\dots &\dots &t\sin(\phi_{1})\cdots\sin(\phi_{n-2})\cos(\phi_{n-1})
  \end{array} } \right]
\end{gather*}
The determinant of this matrix can be computed like this:
$$det(J_{\delta}) = t^{n-1}\cdot\sin^{n-2}(\phi_1)\cdot\sin^{n-3}(\phi_2)\cdots\sin(\phi_{n-2})$$
We can now perform the overall switch from Cartesian to spherical coordinates by performing this coordinate change for each domain individually. Let us index the Cartesian coordinates of a point $x$ in domain $\delta$ by $x_{\delta,1},\dots,x_{\delta,n_\delta}$. Let us further index the spherical coordinates of domain $\delta$ by $r_\delta$ and $\phi_{\delta,1},\dots,\phi_{\delta,n_\delta-1}$. Let $k = |\Delta|$ denote the total number of domains.

Because $x_{\delta,j}$ is defined independently from $r_{\delta'}$ and $\phi_{\delta',j'}$ for different domains $\delta \neq \delta'$, any derivative $\frac{x_{\delta,j}}{r_{\delta'}}$ or $\frac{x_{\delta,j}}{\phi_{\delta',j'}}$ will be zero. If we apply the coordinate change to all domains at once, the Jacobian matrix of the overall transformation has therefore the structure of a block matrix:

$$
   J=
  \left[ {\begin{array}{cccc}
   J_1 & 0 & \dots & 0 \\
   0 & J_2 & \dots & 0 \\
   \vdots & \vdots & \ddots & \vdots \\
   0 & 0 & \dots & J_k \\
  \end{array} } \right]
$$

The blocks on the diagonal are the Jacobian matrices of the individual domains as defined above, and all other blocks are filled with zeroes because all cross-domain derivatives are zero.
Because the overall $J$ is a block matrix, we get that $det(J) = \prod_{\delta \in \Delta} det(J_\delta)$ (cf. \cite{Silvester2000}).
Our overall volume element is thus
$$det(J) = \prod_{\delta \in \Delta} det(J_\delta) = \prod_{\delta \in \Delta} r_\delta^{n_\delta-1}\sin^{n_\delta-2}(\phi_{\delta,1})\sin^{n_\delta-3}(\phi_{\delta,2})\cdots \sin(\phi_{\delta,n_\delta-2})$$

The limits of the angle integrals are $[0,2\pi]$ for the outermost and $[0,\pi]$ for all other integrals.
Based on our constraint $\sum_{\delta \in \Delta} r_\delta \leq r$, we can derive the limits for the integrals over the $r_\delta$ as follows, assuming an arbitrarily ordered indexing $\delta_1,\dots,\delta_k$ of the domains:
\begin{align*}
r_1 &\in [0,r]\\
r_2 &\in [0,r - r_1]\\
r_3 &\in [0,r - r_1 - r_2]\\
&\hspace{0.2cm}\vdots\\
r_k &\in [0,r - \sum_{i=1}^{k-1} r_i]
\end{align*}
The overall coordinate change therefore looks like this:
\begin{align*}
V(r,\Delta) &= \int \dots \int_H 1\; dH \\
&=
\underbrace{\int\displaylimits_{\phi_{1,n_1-1}=0}^{2\pi} \int\displaylimits_{\phi_{1,n_1-2}=0}^\pi
\cdots \int\displaylimits_{\phi_{1,1}=0}^\pi \int\displaylimits_{r_1 = 0}^r}_{\delta = 1} 
\cdots
\underbrace{\int\displaylimits_{\phi_{k,n_k-1}=0}^{2\pi} \int\displaylimits_{\phi_{k,n_k-2}=0}^\pi
\cdots \int\displaylimits_{\phi_{k,1}=0}^\pi \int\displaylimits_{r_k = 0}^{r - \sum_{i=1}^{k-1} r_i}}_{\delta = k}\\[2ex]
&\hspace{1cm}\underbrace{r_1^{n_1-1}\sin^{n_1-2}(\phi_{1,1})\cdots \sin(\phi_{1,n_1-2})}_{\delta = 1} 
\cdots 
\underbrace{r_k^{n_k-1}\sin^{n_k-2}(\phi_{k,1})\cdots \sin(\phi_{k,n_k-2})}_{\delta = k}\\[2ex]
&\hspace{1cm}\underbrace{dr_k d\phi_{k,1}\dots d\phi_{k, n_k - 1}}_{\delta = k}
\dots
\underbrace{dr_1 d\phi_{1,1}\dots d\phi_{1, n_1 - 1}}_{\delta = 1}
\end{align*}
\begin{align*}
&=
\underbrace{\int\displaylimits_{0}^{2\pi} \int\displaylimits_{0}^\pi \cdots \int\displaylimits_{0}^\pi \sin^{n_1-2}(\phi_{1,1})\cdots \sin(\phi_{1,n_1-2}) \;d\phi_{1,1}\dots d\phi_{1, n_1 - 1}}_{\delta = 1} \\[2ex]
&\hspace{1cm}\cdots\quad
\underbrace{\int\displaylimits_{0}^{2\pi} \int\displaylimits_{0}^\pi
\cdots \int\displaylimits_{0}^\pi \sin^{n_k-2}(\phi_{k,1})\cdots \sin(\phi_{k,n_k-2})\;d\phi_{k,1}\dots d\phi_{k, n_k - 1}}_{\delta = k}\\[2ex]
&\hspace{1cm}\int\displaylimits_{0}^r r_1^{n_1-1}\cdots\int\displaylimits_{0}^{r - \sum_{i=1}^{k-1} r_i} r_k^{n_k-1} \; dr_1\dots dr_k
\end{align*}
By applying Lemma \ref{lemma:AngleIntegral} and Lemma \ref{lemma:RadiusIntegral}, we can write this as:
\begin{align*}
V(r,\Delta) &= \left(2 \cdot \frac{\pi^{\frac{n_1}{2}}}{\Gamma(\frac{n_1}{2})}\right) \cdots \left(2 \cdot \frac{\pi^{\frac{n_k}{2}}}{\Gamma(\frac{n_k}{2})}\right) \cdot \frac{r^n}{\Gamma(n+1)} \prod_{i=1}^{k} \Gamma(n_i)\\
&= \frac{r^n}{\Gamma(n+1)} \cdot \prod_{i=1}^{k} \left(2 \cdot \pi^{\frac{n_i}{2}} \cdot \frac{\Gamma(n_i)}{\Gamma(\frac{n_i}{2})}\right)
\end{align*}
We can simplify this formula further by using the identity $\forall n \in \mathbb{N}: \Gamma(n+1) = n!$ and the rewrite $\prod_{i=0}^k \widehat{=} \prod_{\delta \in \Delta}$:

\begin{align*}
V(r,\Delta) &= \frac{r^n}{n!} \cdot \prod_{\delta \in \Delta} \left(2 \cdot \pi^{\frac{n_\delta}{2}} \cdot \frac{(n_\delta - 1)!}{\Gamma(\frac{n_\delta}{2})}\right) 
= \frac{r^n}{n!} \cdot \prod_{\delta \in \Delta} \left(\frac{2}{n_\delta} \cdot n_\delta! \cdot \frac{\pi^{\frac{n_\delta}{2}}}{\Gamma(\frac{n_\delta}{2})}\right)\\
&= \frac{r^n}{n!} \cdot \prod_{\delta \in \Delta} \left(n_\delta! \cdot \frac{\pi^{\frac{n_\delta}{2}}}{\frac{n_\delta}{2} \cdot \Gamma(\frac{n_\delta}{2})}\right)
= \frac{r^n}{n!} \cdot \prod_{\delta \in \Delta} \left(n_\delta! \cdot \frac{\pi^{\frac{n_\delta}{2}}}{\Gamma(\frac{n_\delta}{2}+1)}\right)
\end{align*}
The last transformation uses the fact that $\forall x \in \mathbb{R}^+: \Gamma(x) \cdot x = \Gamma(x+1)$.
\end{proof}
%================================================================================================================================================================%

\section{Hyperballs under the Weighted Metric}
\label{Hyperellipses}

We now generalize our results from the previous section from the unweighted to the weighted combined metric $d_C^ \Delta$.

\begin{proposition}
The hypervolume of a hyperball with radius $r$ in a space with the weighted combined metric $d_C^\Delta$, the domain structure $\Delta$, and the set of weights $W$ can be computed by the following formula, where $n$ is the overall number of dimensions and $n_\delta$ is the number of dimension in domain $\delta$:
$$V(r,\Delta, W) = \frac{1}{\prod_{\delta \in \Delta} w_{\delta} \cdot \prod_{d \in \delta} \sqrt{w_d}} \cdot \frac{r^n}{n!} \cdot \prod_{\delta \in \Delta} \left(n_\delta! \cdot \frac{\pi^{\frac{n_\delta}{2}}}{\Gamma(\frac{n_\delta}{2}+1)}\right)$$
\end{proposition}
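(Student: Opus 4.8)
The plan is to reduce the weighted problem to the unweighted one already solved in Proposition~\ref{proposition:HyperballVolume} by means of a single linear change of coordinates that absorbs every weight. Centering the ball at the origin, write $H_W = \{x \in CS \mid \sum_{\delta\in\Delta} w_\delta \sqrt{\sum_{d\in\delta} w_d\, x_d^2} \le r\}$. I would look for a diagonal substitution $u_d = c_d\, x_d$ that turns the weighted distance $d_C^\Delta(x,0,W)$ into the unweighted distance $d_C^\Delta(u,0)$ of Section~\ref{Hyperballs}. Since $w_\delta \sqrt{\sum_{d\in\delta} w_d\, x_d^2} = \sqrt{\sum_{d\in\delta} (w_\delta \sqrt{w_d}\, x_d)^2}$, the right choice is $u_d = w_\delta \sqrt{w_d}\, x_d$ for each $d \in \delta$. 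Under this map $H_W$ is carried bijectively onto the unweighted radius-$r$ ball $H$, because every $c_d > 0$ and the displayed identity gives $d_C^\Delta(x,0,W) = d_C^\Delta(u,0)$.

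Next I would apply the change-of-variables formula. The transformation is linear, hence its Jacobian is constant, so $V(r,\Delta,W) = \int_{H_W} 1\,dx = \int_{H} |\det(\partial x/\partial u)|\,du = |\det(\partial x/\partial u)| \cdot V(r,\Delta)$, and Proposition~\ref{proposition:HyperballVolume} supplies $V(r,\Delta)$. Because the map is diagonal, its Jacobian determinant is simply the product of the entries $\partial x_d/\partial u_d = 1/(w_\delta \sqrt{w_d})$ taken over all dimensions.

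The step demanding the most care — and where a slip is easiest to make — is the precise bookkeeping of this determinant: each of the $n_\delta$ coordinates of a domain $\delta$ contributes a factor $1/(w_\delta \sqrt{w_d})$, so the domain weight $w_\delta$ enters once \emph{per dimension} of $\delta$ while the dimension weights contribute $\prod_{d\in\delta} 1/\sqrt{w_d}$. Collecting these factors over all domains and multiplying by the unweighted volume yields the prefactor. As an independent check I would run the alternative route that mirrors the proof of Proposition~\ref{proposition:HyperballVolume} directly: pass to weighted spherical coordinates inside each domain (producing $\prod_{d\in\delta} 1/\sqrt{w_d}$ from the radial part of each domain's Jacobian), then rescale the radial variables by $s_\delta := w_\delta r_\delta$ so that the constraint becomes $\sum_{\delta} s_\delta \le r$ and Lemma~\ref{lemma:RadiusIntegral} applies verbatim; each substitution $r_\delta = s_\delta / w_\delta$ feeds a factor $1/w_\delta$ into every one of the $n_\delta$ radial powers, reproducing exactly the same weight contribution.

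Finally, I would sanity-check on small cases before trusting the closed form: a single two-dimensional domain (where $\sum_\delta w_\delta = |\Delta|$ forces $w_\delta = 1$), the weighted two-domain diamond, and the three-dimensional double cone of Figure~\ref{fig:hyperballs}, in each instance comparing the formula against a direct elementary computation of the area or volume. These checks pin down the exact exponents of the domain weights, which is precisely the quantity the change-of-variables argument must get right.
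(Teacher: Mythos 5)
Your proof is correct and takes essentially the same route as the paper: the paper's argument is precisely your stretching step, phrased informally (each dimension $d$ of domain $\delta$ carries the effective weight $w_\delta\sqrt{w_d}$, so the ball is stretched by $1/(w_\delta\sqrt{w_d})$ along that dimension), and it likewise multiplies $V(r,\Delta)$ from Proposition~\ref{proposition:HyperballVolume} by the resulting constant factor. Your write-up is more rigorous, since it exhibits the diagonal map $u_d = w_\delta\sqrt{w_d}\,x_d$ explicitly, verifies that it carries the weighted ball onto the unweighted one, and identifies the volume factor as a Jacobian determinant rather than an appeal to intuition about stretching.

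One point deserves emphasis: the bookkeeping you single out as delicate is exactly where the printed proposition and the paper's own proof part ways. Your prefactor $\prod_{\delta\in\Delta}\bigl(w_\delta^{-n_\delta}\prod_{d\in\delta}w_d^{-1/2}\bigr)$, with the domain weight entering once \emph{per dimension}, agrees with the formula derived in the paper's proof body, namely $1/\prod_{\delta\in\Delta}\prod_{d\in\delta}w_\delta\sqrt{w_d}$, and it is the mathematically correct value, as your cross-check via the substitution $s_\delta = w_\delta r_\delta$ (contributing $1/w_\delta^{n_\delta}$ per domain) confirms. The proposition as displayed, however, reads $1/\bigl(\prod_{\delta\in\Delta}w_\delta\cdot\prod_{d\in\delta}\sqrt{w_d}\bigr)$, in which $w_\delta$ appears only to the first power per domain; taken literally this is wrong whenever a domain with $w_\delta\neq 1$ contains more than one dimension. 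Note also that most of your proposed sanity checks would not detect this: a single domain forces $w_\delta=1$ (since $\sum_{\delta\in\Delta}w_\delta=|\Delta|$), and the two-domain diamond has only one-dimensional domains, so $w_\delta^{n_\delta}=w_\delta$; only the weighted three-dimensional double cone, whose two-dimensional domain can carry a weight $w_{\delta}\neq 1$, distinguishes the two exponents.
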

\begin{proof}
As G\"ardenfors has already argued in \cite{Gardenfors2000}, putting weights on dimensions in a conceptual space is equivalent to stretching each dimension of the unweighted space by the weight assigned to it.

If the overall radius of a ball is $r$, and some dimension has the weight $w$, then the farthest away any point $x$ can be from the origin on this dimension must satisfy $w\cdot x = r$, i.e., $x = \frac{r}{w}$. That is, the ball needs to be stretched by a factor $\frac{1}{w}$ in the given dimension, thus its hypervolume also changes by a factor of $\frac{1}{w}$. A hyperball under the weighted metric is thus equivalent to a hyperellipse under the unweighted metric.

In our case, the weight for any dimension $d$ within a domain $\delta$ corresponds to $w_{\delta} \cdot \sqrt{w_{d}}$: If we look at a point $x$ with coordinates $(0, \dots, 0, x_d, 0, \dots, 0)$, then $d(0,x) = w_\delta \cdot \sqrt{w_d \cdot x_d^2} = w_\delta \cdot \sqrt{w_d} \cdot x_d$ (with $\delta$ being the domain to which the dimension $d$ belongs). If we multiply the size of the hyperball by $\frac{1}{w_\delta \cdot \sqrt{w_d}}$ for each dimension $d$, we get:
\begin{align*}
V(r,\Delta,W) &= \frac{1}{\prod_{\delta \in \Delta}\prod_{d \in \delta} w_{\delta} \sqrt{w_{d}}} \cdot V(r, \Delta)\\
&=\frac{1}{\prod_{\delta \in \Delta}\prod_{d \in \delta} w_{\delta} \sqrt{w_{d}}} \cdot \frac{r^n}{n!} \cdot \prod_{\delta \in \Delta} \left(n_\delta! \cdot \frac{\pi^{\frac{n_\delta}{2}}}{\Gamma(\frac{n_\delta}{2}+1)}\right)
\end{align*}
This is the hypervolume of a hyperball under the weighted combined metric. 
\end{proof}
%================================================================================================================================================================%

\bibliographystyle{plain}
\bibliography{/home/lbechberger/Documents/Papers/jabref.bib}

\end{document}